\renewcommand{\mathcal}{\mathscr}
\definecolor{DarkBlueUMA}{RGB}{0,45,93} 
\newtheorem{corollary}{Corollary}
\newtheorem{proposition}{Proposition}
\newtheorem{theorem}{Theorem}
\newtheorem{remark}{Remark}
\newtheorem{example}{Example}
\newcommand{\R}{\mathbb{R}} 
\def\konproof{\rm\hspace*{\fill}$\Box$}
\def\prooftxt{\mbox{\large\sc proof: }}
\newenvironment{proof}{\par\smallskip\noindent\prooftxt }%
{\konproof\par\vspace*{6pt}}
\begin{document}

\begin{frontmatter}

\title{A Critical Analysis of the Theoretical Framework\\ of the Extreme Learning Machine}

\author[A,A1]{Irina Perfilieva\corref{irina}}
\ead{irina.perfilieva@osu.cz}
\cortext[irina]{Corresponding Author}

\author[B]{Nicol\'as Madrid}
\ead{Nicolas.Madrid@uma.es}
\author[B]{Manuel Ojeda-Aciego}
\ead{aciego@uma.es}

\author[C]{Piotr Artiemjew}
\ead{artem@matman.uwm.edu.pl}
\author[C]{Agnieszka Niemczynowicz}
\ead{niemaga@matman.uwm.edu.pl}

\address[A]{University of Ostrava\\Institute for Research and Applications of Fuzzy Modeling\\30. dubna 22, 701 03 Ostrava 1, Czech Republic}
\address[A1]{Faculty of Computer Science and Telecommunications, Cracow University of Technology, Krakow, Poland}
\address[B]{Universidad de M\'alaga. Departamento de Matem\'atica Aplicada\\ Blv. Louis Pasteur 35. 29071 M\'alaga, Spain}
\address[C]{University of Warmia and Mazury in Olsztyn\\ Faculty of Mathematics and Computer Science, Olsztyn, Poland}

\begin{abstract}

Despite the number of successful applications of the Extreme Learning Machine (ELM), we show that its underlying  foundational   principles do not have a rigorous mathematical justification. Specifically, we refute the proofs of two main statements, and we also create a dataset that provides a counterexample to the ELM learning algorithm and explain its design, which leads to many such counterexamples. Finally, we provide alternative statements of the foundations, which justify the efficiency of ELM in some theoretical cases. 
\end{abstract}

\end{frontmatter}

\section{\label{S1}Introduction}

The Extreme Learning Machine provides a machine learning algorithm known for its simplicity and efficiency. It was proposed by Huang,  Zhu, and  Siew in \cite{Huang06} as a fast and effective learning method for Single-hidden Layer Feed-forward Neural networks (SLFN).
The key idea behind ELM is that 
``\textit{hidden neurons need not be tuned with the consideration of neural network generalization theory, control theory, matrix theory and linear system theory}''~\cite{Huang15}. 

Essentially, ELM consists in  randomly initializing the weights connecting the input layer to the hidden layer and, then, analytically determine the weights connecting the hidden layer to the output layer. This analytical solution makes the learning process much faster compared to traditional gradient-based methods. According to \cite{SusCam20}, training a network using an ELM is computationally inexpensive compared to evolutionary optimization and classical neural network training algorithms.

Due to the simplicity of the proposed model and the proclaimed principle of universal approximation, this methodology has gained popularity among many researchers. ELM theory and methodology have been widely used from both a theoretical and an applied point of view. Within the former, we can find 
contributions related to the construction of regression models \cite{Liu2023,Zhang202310589,Zhao2023,Wang202312367}, classification \cite{Cai202121,Wu202310951,Zhang2023}, or recognition \cite{Ouyang2021,Peng2022596,Schiassi:2021aa}; concerning applications, we can find  them among others in the fields of medicine \cite{Chakravarthy202347585,Jiang20233631,Kuila202329857}, weather forecast \cite{Jiang2023,Zhang202311059}, financial modeling \cite{Novykov20231581,Zhu202050}, or facial emotion or sign language recognition \cite{Banskota20236479,Sonugur20238553}.

The scientific community, perhaps in view of the numerous successful applications, accepts on faith the theoretical framework proposed in Huang’s pioneering work \cite{Huang03,Huang06}  and it has been never revised. 
Moreover, to demonstrate success, many methods using ELM limit datasets, ignore the randomness of input weights, or neglect the contribution of bias to confidence interval estimates. Only a few papers present estimates to obtain forecast confidence intervals or forecasts using ELM, which also provides an exhaustive characterization of ELM as an SLFN with random input weights and biases, allowing optimization of output weights using a least squares procedure. 
 
The purpose of this article is to show that the theoretical framework  in \cite{Huang06} is not supported by the rationale proposed there. To do this, we show the theoretical inconsistency of the proofs of the two main theorems given in \cite{Huang06}, and also provide some counterexamples that refute the theoretical foundations of the ELM. On the other hand, we propose several steps to build a strong mathematical foundation by placing stricter restrictions on the main theorems in \cite{Huang06}.

In order to do this, we carefully analyzed  proofs in the paper \cite{Huang06}, which many authors take for granted, and also references in the existing literature.
We found  articles, see \cite{Wang08} and references therein, where the authors  analyzed the proposed principles of the ELM algorithm  and came to the conclusion that they are similar to  feedforward networks, both Radial Basis Functions (RBF) and Multi-Layer Perceptrons (MLP), with randomly fixed hidden neurons, which have previously been proposed and discussed by other authors in papers and textbooks. Our research and analysis are much deeper than that of \cite{Wang08}, as we  do not focus on the philosophy but  on the methodology and on the technical details of the proofs in \cite{Huang06}.

We also take into account the main technical trick - randomness, which the authors of \cite{Huang06} put forward in a response to \cite{Wang08} as the most significant in their approach.
Our theoretical analysis is supported by the specially designed counterexample which refutes the statement: \textit{``SLFNs (with $N$ hidden nodes) with randomly chosen input weights and hidden layer biases ... can exactly learn $N$ distinct observations.''} This counterexample proposes a dataset of I/O pairs that ELM cannot reproduce, even if it uses multiple runs with randomly selected weights and biases.
We believe that with this counterexample, we have contributed to the topic of neural network interpretability (AIX) since the counterexample we created shows the limitations of ANN approximation capabilities.

The structure of the paper is as follows. After recalling in Section~\ref{prelim} the basic notions and notation of the ELM theoretical framework, we present in Section~\ref{S5} one counterexample to the learning algorithm proposed in \cite{Huang06}. The construction of this counterexample, which allows one to construct many similar ones, is explained in detail. Then, in section~\ref{S2}, we analyze the statements and proofs of \cite[Theorems 2.1 and 2.2]{Huang06} and show their shortcomings, both theoretically and with some pathological counterexamples.
In Section~\ref{S4} we comment on the experimental analysis done in  \cite{Huang06} that discusses the test cases computed by the proposed ELM learning algorithm and compares their performance characteristics with other learning strategies. We show that the actual processing of the proposed test case described in \cite{Huang06} differs from the ELM algorithm. In Section~\ref{SecTeoNew} we provide a new theoretical result in the line of \cite[Theorem~2.1]{Huang06} that supports the technique under certain circumstances. Finally, in Section~\ref{S7}, we show that the counterexample to the ELM learning algorithm proposed in Section~\ref{S5} can be learned by a corresponding ELM with different hyperparameters than those, which were stated in 
Theorem~2.1 from \cite{Huang06}. In Section~\ref{conclusion} we present some conclusions and prospects for future work.

\section{Preliminary definitions on the ELM theoretical framework}\label{prelim}

We use the notation introduced in \cite{Huang06} and assume that an SLFN is characterized by $\tilde N$ hidden nodes, an activation function $g\colon \R\to \R$, $n$-dimensional vector inputs $\mathbf{x}_i,\,i=1,\ldots, N$, and $m$-dimensional vector outputs $\mathbf{t}_i,\,i=1,\ldots, N$. All neurons in the hidden layer are fully connected to the inputs via the weights $w_{ij}, \,i=1,\ldots, \tilde N,\, j=1,\ldots, n$, and the bias vector $(b_1,\ldots, b_{\tilde N})$. Finally, the hidden layer neurons are fully connected to the outputs via the weight vectors $\beta_i=(\beta_{i1},\ldots, \beta_{im})^T, \,i=1,\ldots, \tilde N$. 
Graphically, the underlying architecture is characterized by a single hidden layer as depicted in Figure~\ref{figSLFN}.

\begin{figure}
$$
\includegraphics[scale=0.35]{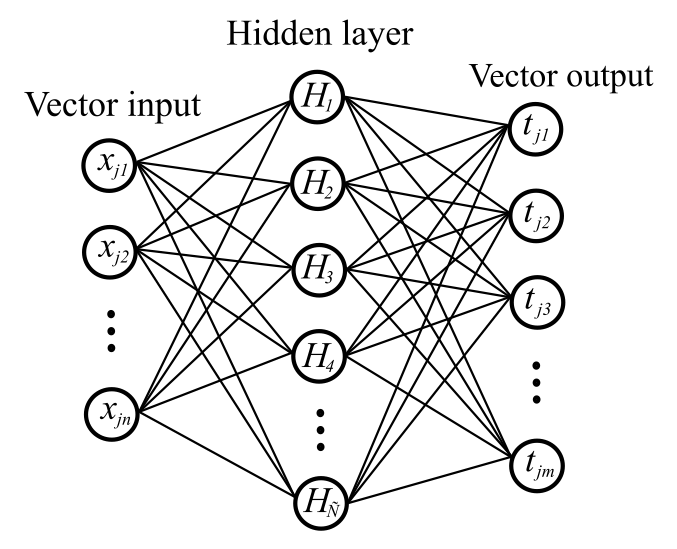}
$$
\vspace{-1cm}
\caption{Graphical representation of the structure of a Single-hidden Layer Feed-forward Neural network.}\label{figSLFN}
\end{figure}

SLFN training is based on $N$ different training samples $\{(\mathbf{x}_i,\mathbf{t}_i)\mid i=1,\ldots, N\}$, where $\mathbf{x}_i, \,i=1,\ldots, N$ are $n$-dimensional vector inputs, and $\mathbf{t}_i,\,i=1,\ldots, N$ are $m$-dimensional vector outputs. The computational model is given by
\begin{equation}
\label{SLFNmodel1}
\sum_{i=1}^{\tilde N}\beta_i g(\mathbf{w}_i\cdot \mathbf{x}_j+b_i)=\mathbf{o}_j,
\end{equation}
where $\mathbf{o}_j$ is the computed output vector corresponding to the input $\mathbf{x}_j,\, j=1,\ldots, N$.

Following~\cite{Huang06} again, we denote $h_{ji}=g(\mathbf{w}_i\cdot \mathbf{x}_j+b_i)$ and define the $N\times \tilde N$ real matrix  $\mathbf H =(h_{ji})$, with  $j=1,\ldots, N, i=1,\ldots, \tilde N$, which is called a \emph{hidden layer output matrix} of the neural network. Using $\mathbf H$, Equation~\eqref{SLFNmodel1} can be rewritten as the following matrix equation
\begin{equation}
\label{SLFNmodel2}
\mathbf H\beta=\mathbf T,
\end{equation}
where $\beta=(\beta_1^T,\ldots, \beta_m^T)$ is a $\tilde N\times m$ matrix formed by the weight vectors, and $\mathbf T$ is an $N\times m$ matrix of output vectors.

\begin{remark}
Thus, the equality (\ref{SLFNmodel2}) can be considered as a system of $N$ linear equations, in which the coefficients have the form $h_{ji}=g(\mathbf{w}_i\cdot \mathbf{x}_j+ b_i)$, $j=1,\ldots, N$, $i=1,\ldots, \tilde N$, where $N$ is the number of samples, $\tilde N$ is the number of neurons. For any given $\mathbf{w}_i$ and $b_i$, the system has $N\times \tilde N$ unknowns $\beta_{ik}$, where $k=1,\ldots, m$ is a reference to the output dimension. Therefore, for each fixed $k$, (\ref{SLFNmodel2}) can be rewritten as follows:
\begin{equation}
\label{SLFNmodel3}
\sum_{i=1}^{\tilde N}h_{ji}\cdot \beta_{ik}=\sum_{i=1}^{\tilde N} g(\mathbf{w}_i\cdot \mathbf{x}_j+b_i)\cdot\beta_{ik}=t_{jk}.
\end{equation}
Thus, for each fixed $k=1,\ldots,m$, the system (\ref{SLFNmodel2}) contains $N$ linear equations for $\tilde N$ unknowns. In other words, (\ref{SLFNmodel2}) is a set of $m$ independent linear subsystems of $N$ linear equations with $\tilde N$ unknowns such that all subsystems have the same coefficient matrix $H=(h_{ji})$.
To solve the system (\ref{SLFNmodel3}) for each fixed $k=1,\ldots,m$, the authors of the ELM assume that all values of input-output pairs $(\mathbf{x}_j,t_{jk})$, $j=1,\ldots, N$, are processed simultaneously.

For example, if the input-output pairs (samples) belong to a function of one variable, then to solve (\ref{SLFNmodel3}), it is necessary to have all samples simultaneously.

Below we reproduce two theorems in which the authors of ELM \cite{Huang06} consider the question of solvability of (\ref{SLFNmodel2}), respectively (\ref{SLFNmodel3}). The first theorem assumes that the matrix $H$ is square, i.e. $N=\tilde N$, and the second theorem assumes that $\tilde N< N$.
\end{remark}

\textbf{Theorem 2.1} \cite{Huang06}
{\it Given a standard SLFN with $N$ hidden nodes
and activation function $g\colon \R\to \R$ which is infinitely
differentiable in any interval, for $N$ arbitrary distinct samples $(x_i,t_i)$,where $x_i\in\R^n$ and $t_i\in\R^m$, for any $\mathbf{w}_i$ and $b_i$ randomly chosen from any intervals of $\R^n$ and $\R$, respectively, according to any continuous probability distribution, then with probability one, the hidden layer output matrix $\mathbf H$ of the SLFN is invertible, and $\|\mathbf H\beta-\mathbf T\|=0$}.\par\smallskip

 \textbf{Theorem 2.2} \cite{Huang06}   {\it Given any small positive value $\varepsilon >0$  and activation function $g\colon\R\to \R$ which is infinitely differentiable in any interval, there exists $\tilde N\leq N$ such that for $N$ arbitrary distinct samples $\{(\mathbf{x}_i,\mathbf{t}_i)\mid i=1,\ldots, N\}$, where $\mathbf{x}_i\in \R^n$, and $\mathbf{t}_i\in \R^m$, for any $\mathbf{w}_i$ and $b_i$ randomly chosen from any intervals of $\R^n$ and $\R$, respectively, according to any continuous probability distribution, then with probability one, $\|\mathbf H\beta-\mathbf T\|<\varepsilon$}.\par\smallskip

\noindent\textbf{Algorithm ELM} \cite[section~3.3]{Huang06} 

 Given a training set $\aleph = \{(\mathbf x_i,\mathbf t_i) \mid\mathbf x_i\in \mathbb R^n,\mathbf t_i \in \mathbb R^m, i =1, \dots, N\}$, activation function $g(x)$, and hidden node number $\tilde N$,
\begin{description}
 \item[Step 1:] Randomly assign input weight $\mathbf w_i$ and bias $\mathbf b_i, i=1, \dots, \tilde N$.
 \item[Step 2:] Calculate the hidden layer output matrix $\mathbf H$. 
  \item[Step 3:] Calculate the output weight $\beta=\mathbf H^\dagger \mathbf T$,   where $\mathbf H^\dagger$ is the Moore-Penrose generalized inverse of matrix $\mathbf H$ and $\mathbf T =[\mathbf t_1, \dots, \mathbf t_n]^T$.
\end{description}

\section{Counterexample to the ELM Learning Algorithm}
\label{S5}

This section shows one counterexample to the ELM learning algorithm   which refutes the following statement: 
{\em
``SLFNs (with $N$ hidden nodes) with randomly chosen input weights and hidden layer biases \dots\ can exactly learn $N$ distinct observations.'' } This statement is taken from \cite{Huang06}, citing \cite{Huang03,TamTat97}.

Along with the description of the counterexample, its construction is explained, which makes it possible to construct many similar counterexamples. We will see that for the set of input-output pairs we created, the ELM learning algorithm does not return output values for the corresponding inputs used in training, even if the number of hidden nodes and different training samples are the same. Last but not least, ELM does not return the above mentioned output values even if it uses multiple runs with randomly selected weights and biases.

\begin{example}
\label{Ex3} 
The dataset $S\subset \R\times\R$ of $N=400$ different training samples is the union of the four sets, i.e. $S=\bigcup_{i=1}^4 S_i$, where
\begin{itemize}
\item $S_1=\{ (\frac{k\pi}{100}, e^{\frac{k\pi}{100}}\sin(\frac{k\pi}{10})) \mid k=1,\ldots, 100 \}$;
\item $S_2=\{ (\frac{k\pi}{100}+\frac{\pi}{400}, e^{-\frac{k\pi}{100}-\frac{\pi}{400}}\sin(\frac{k\pi}{10}+\frac{\pi}{40})) \mid k=1,\ldots, 100 \}$;
\item $S_3=\{ (\frac{k\pi}{100}+\frac{\pi}{200}, e^{\frac{k\pi}{100}+\frac{\pi}{200}}(-\sin(\frac{k\pi}{10}+\frac{\pi}{20}))) \mid k=1,\ldots, 100 \}$;
\item $S_4=\{ (\frac{k\pi}{100}+\frac{\pi}{300}, e^{-\frac{k\pi}{100}-\frac{\pi}{300}}(-\sin(\frac{k\pi}{10}+\frac{\pi}{30}))) \mid k=1,\ldots, 100 \}$;

\end{itemize}
The illustration of the dataset $S$ is given in Figure~\ref{DataSet}.

This set was used as the training set for the ELM learning algorithm, together with the sigmoidal activation function $g$ and hidden node numbers $\tilde N= 400,300,200,100,50$. In the first case, when the number of hidden nodes $\tilde N= 400$ and the number of training samples are the same, the calculated output values are far from those used for training. In all other cases, $\tilde N= 300,200,100,50$, the calculated output values are close to those calculated for $\tilde N= 400$ and far from those used for training. 

This contradicts  \cite[Remark 3]{Huang06}, which comments on the ELM algorithm by stating that ``for any infinitely differentiable activation function SLFNs with $N$ hidden nodes can learn $N$ distinct samples exactly and SLFNs may require less than $N$ hidden nodes if learning error is allowed''. The outputs of the ELM learning algorithm, trained separately for $\tilde N= 400$ and $\tilde N= 50$ hidden nodes, and applied to each input value of the dataset $S$, are shown in Figure~\ref{ELM400} together with training samples. It is clear that these results contradict both statements of Remark 3.
\begin{figure}
\includegraphics[height=5cm,width=8cm]{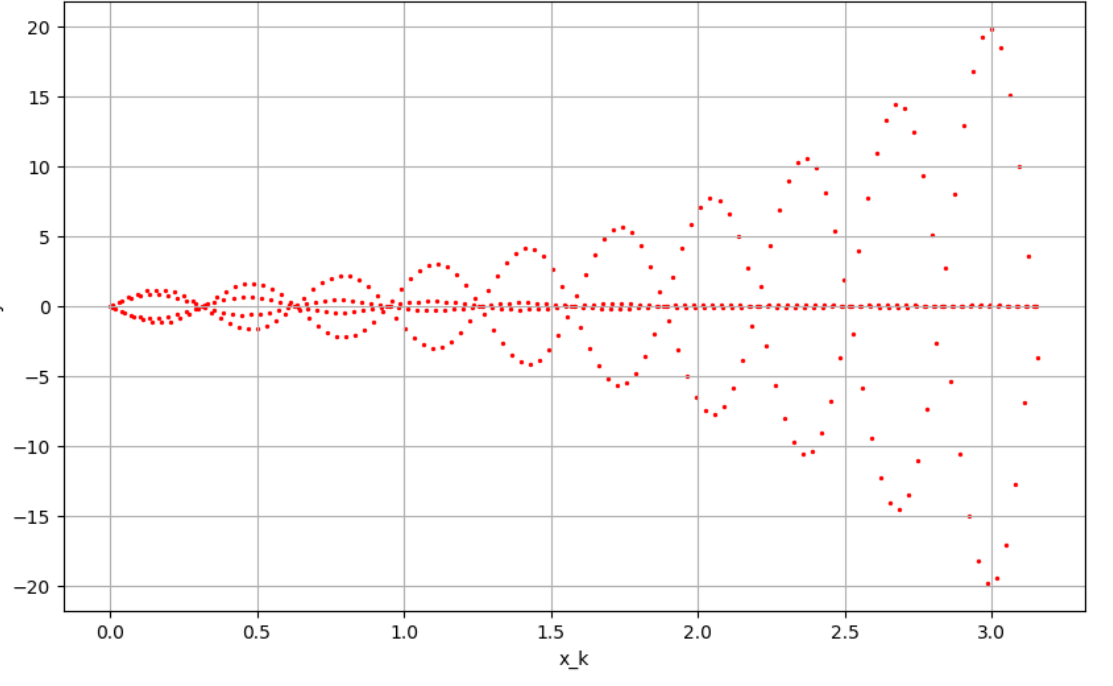}
\includegraphics[height=5.75cm,width=9cm]{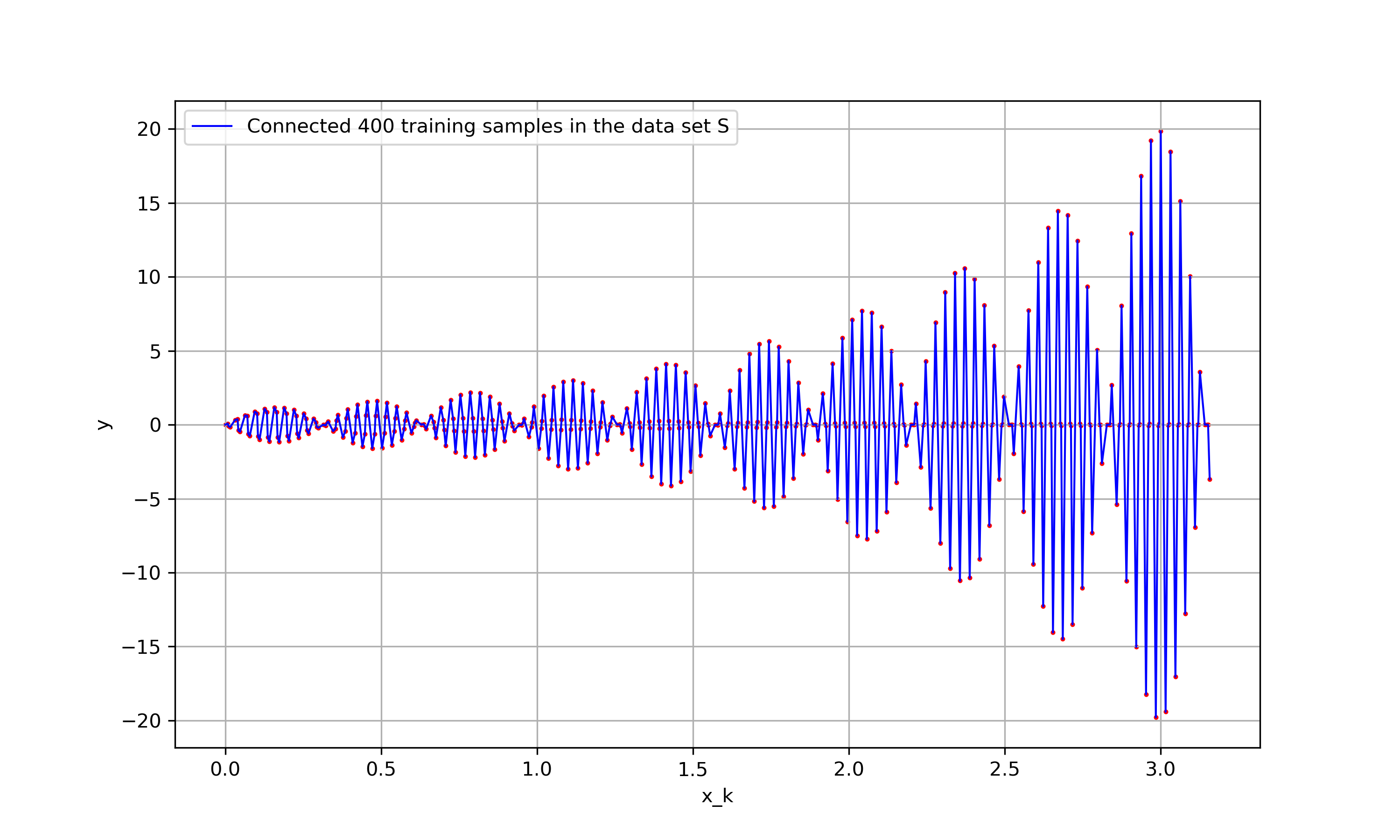}
\caption{\label{DataSet}Data set $S$ of $N=400$ different training samples for the ELM learning algorithm in two versions: pointwise (left) and piecewise connected (right).}
\end{figure}
\begin{figure}
\includegraphics[height=5.75cm,width=9cm]{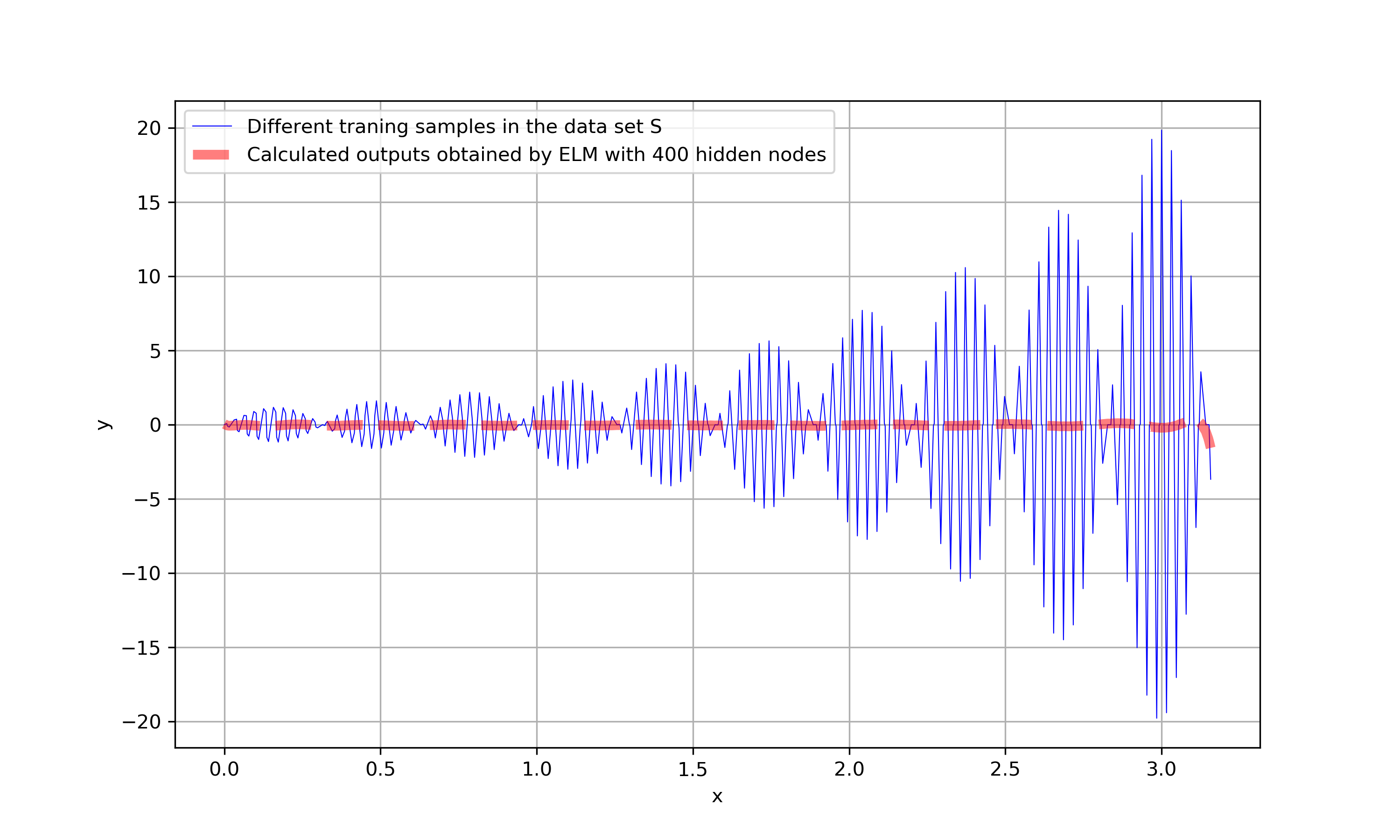}
\includegraphics[height=5cm,width=7.6cm]{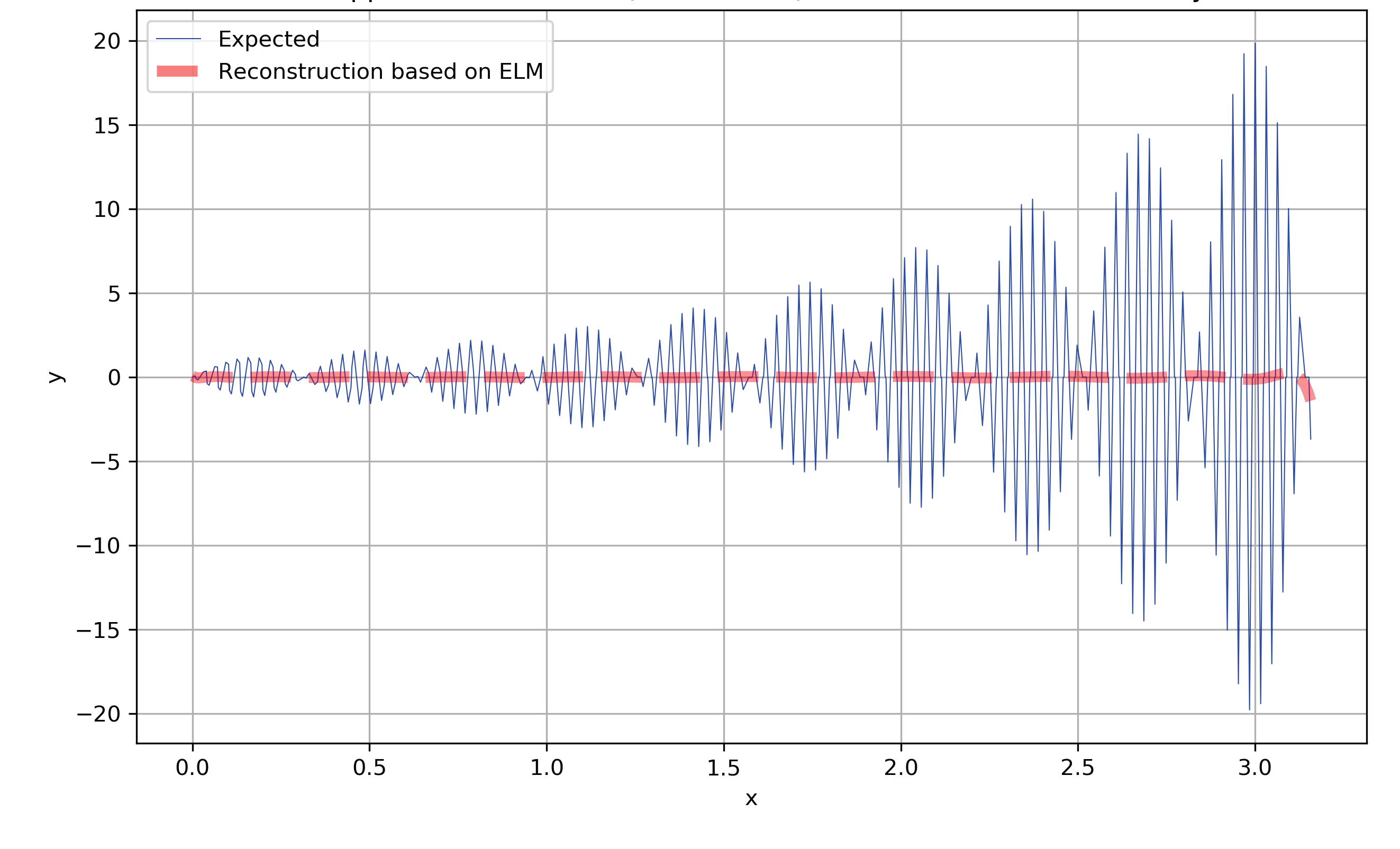}
\caption{\label{ELM400}Raw data from the set $S$ used for training (blue), and calculated output values (red) resulting from multiple runs with 
the ELM learning algorithm trained for $\tilde N= 400$ (left) and $\tilde N= 50$ (right) hidden nodes. Visually, both (left and right) calculated outputs (red) are far from those (blue) used for training.}
\end{figure}
\end{example}

Commenting on the results shown in Example~\ref{Ex3}, we see that the dataset $S$ is a counterexample to both  statements of Theorem 2.1 and Theorem 2.2. This means that if  input-output samples are taken from $S$, we will not reach the conclusions of the abovementioned theorems even if it uses multiple runs with randomly selected weights and biases.

In creating the dataset $S$, we combined two equally distributed datasets $S_1\cup S_3$ and $S_2\cup S_4$ with samples of sharp and soft oscillation (w.r.t. the zero value) functions. Since the ELM learning algorithm aims to minimize the mean squared error, it calculates output values close to zero. This conclusion is illustrated in Figure~\ref{ELM-400/50}, where we show the outputs of the ELM learning algorithm trained on $S$ for $\tilde N= 400$ (left) and $\tilde N= 50$ (right) hidden nodes.
\begin{figure}
\includegraphics[height=5cm,width=7.6cm]{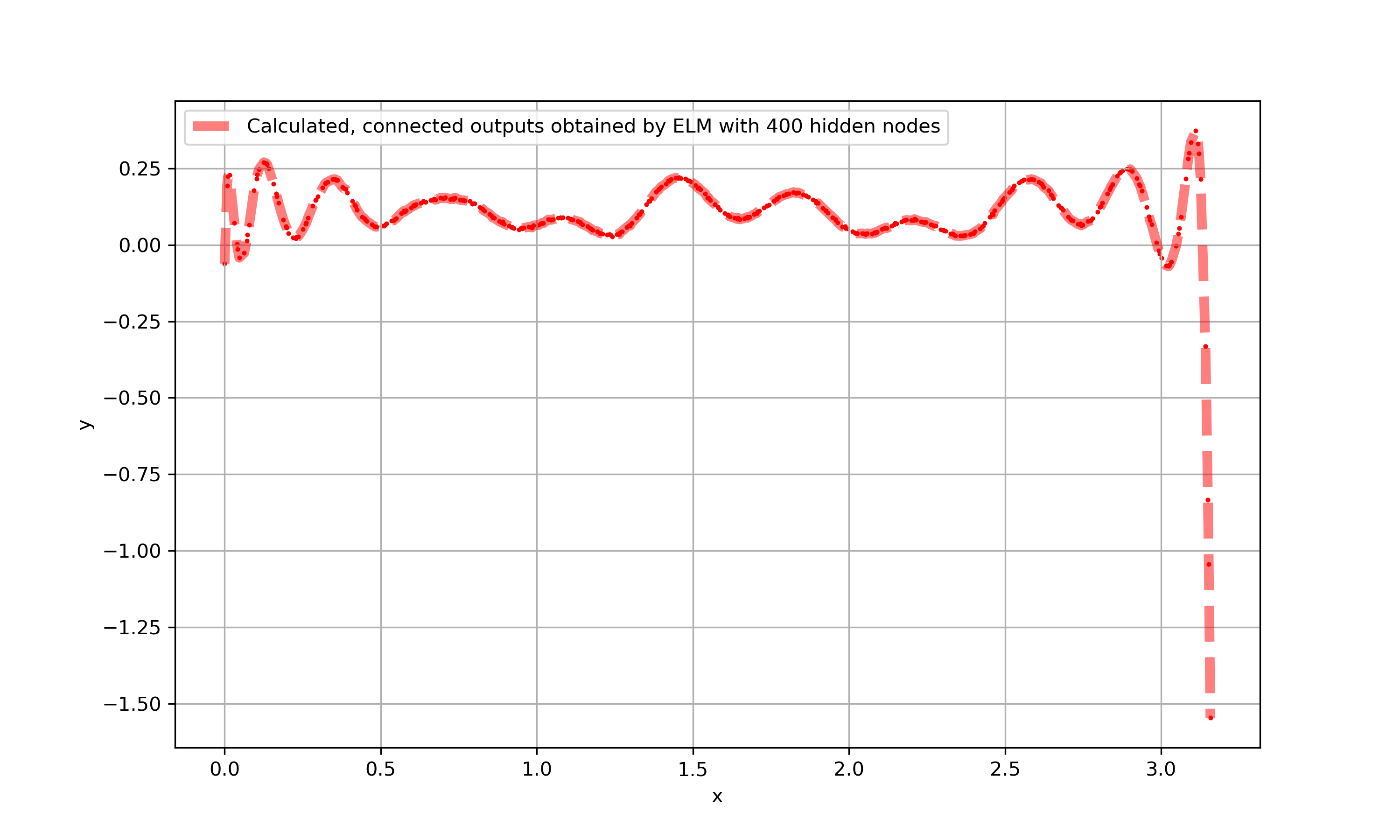}
\includegraphics[height=5cm,width=7.6cm]{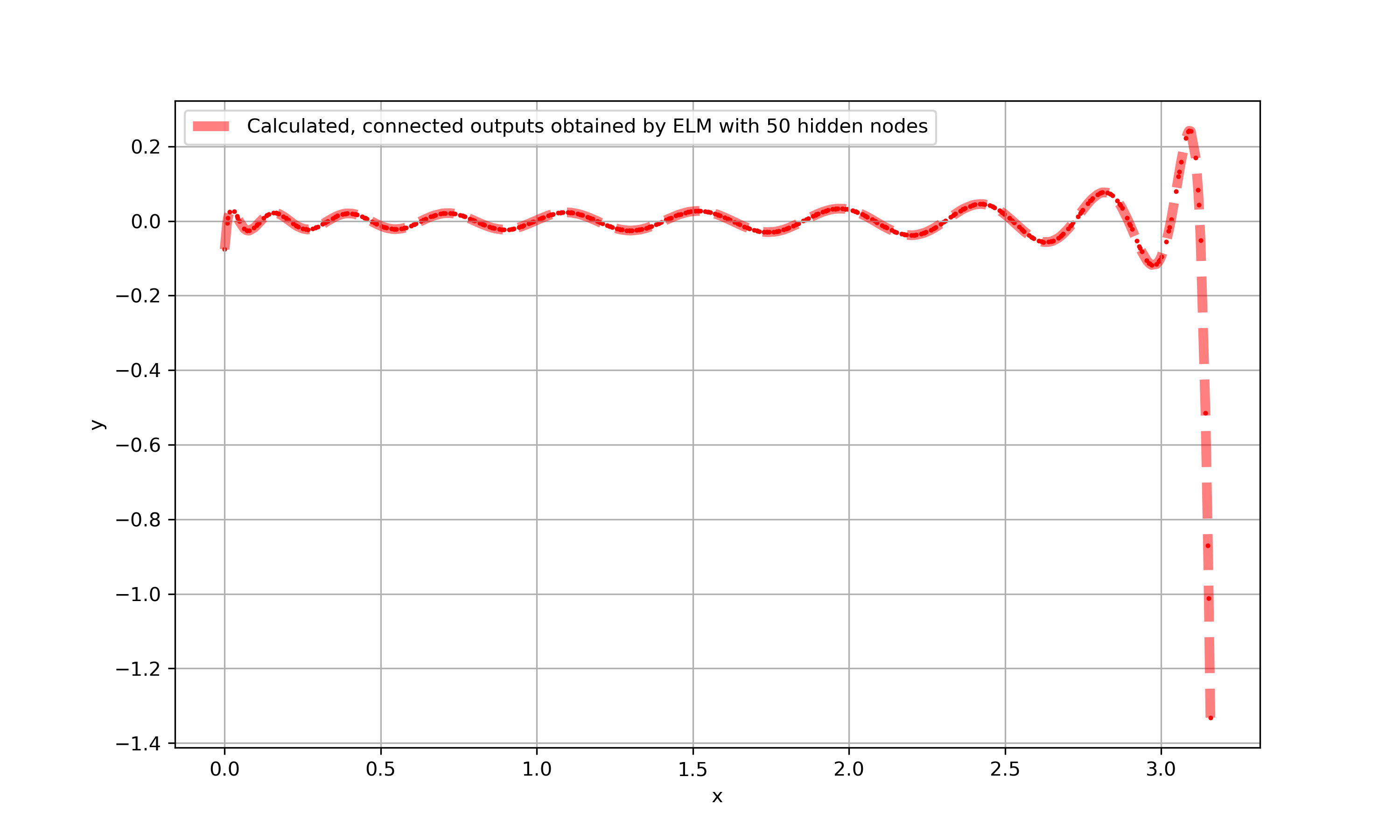}
\caption{Two graphs illustrate the results of several runs with randomly selected weights and biases of the ELM learning algorithm trained on hidden nodes of $\tilde N= 400$ (left) and $\tilde N= 50$ (right) on Example~\ref{Ex3} with dataset $S$ consisting of $N=400$ training pairs. Visually calculated results are close to zero.}\label{ELM-400/50}
\end{figure}

\section{The random choice of weights and bias values makes the ELM theoretical framework unsupported}
\label{S2}

In this section, we analyze the statements and proofs of two main theorems from \cite{Huang06}, which the author refers to 
as  
``\emph{rigorously proving that  for any infinitely differentiable activation function SLFNs with $N$ hidden nodes can learn $N$ distinct samples exactly and SLFNs may require less than $N$ hidden nodes if learning error is allowed.}''

\subsection{Refutation of the proof of  \cite[Theorem 2.1]{Huang06}}

For ease of reading let us repeat the statement of \cite[Theorem 2.1]{Huang06}.

\noindent\textbf{Theorem 2.1} \cite{Huang06}
{\it Given a standard SLFN with $N$ hidden nodes
and activation function $g\colon \R\to \R$ which is infinitely
differentiable in any interval, for $N$ arbitrary distinct samples $(x_i,t_i)$,where $x_i\in\R^n$ and $t_i\in\R^m$, for any $\mathbf{w}_i$ and $b_i$ randomly chosen from any intervals of $\R^n$ and $\R$, respectively, according to any continuous probability distribution, then with probability one, the hidden layer output matrix $\mathbf H$ of the SLFN is invertible, and $\|\mathbf H\beta-\mathbf T\|=0$}.\smallskip

The formulation above is confusing, in that the matrix norm is undefined, but the main problem with it is that its proof is not correct, as we will show below.

In the proof, the authors refer to the method of Tamura and Tateishi \cite{TamTat97} and their own previous work \cite{Huang03}. Our refutation is based on the following list of wrong claims appeared in the proposed proof of Theorem 2.1. In each item, we first cite the authors' arguments and then we show the mistake.

\begin{enumerate}
\item In the third paragraph of the proposed proof of Theorem 2.1 in~\cite{Huang06}, the authors state:

\begin{quotation}
\noindent {\it ``Since $\mathbf{w}_i$ are randomly generated based on a continuous
probability distribution, we can assume that 
$\mathbf{w}_i\cdot \mathbf{x}_k\neq \mathbf{w}_i \cdot \mathbf{x}_{k'}$} for all $k\neq k'$.'' 
\end{quotation}

To invalidate this assumption, we firstly  prove the following result.

\begin{proposition}\label{prop1}
Given two linearly independent vectors $\mathbf{x}_1,\mathbf{x}_2\in \R^n$, $n\geq 2$, and $v\in \R\smallsetminus \{0\}$, there is a non-zero $n$-dimensional vector $\mathbf{w}$ such that 
\begin{equation}
\label{eq1}
\mathbf{w}\cdot \mathbf{x}_{1}=\mathbf{w}\cdot \mathbf{x}_{2}=v.
\end{equation} 
\end{proposition}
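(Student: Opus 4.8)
The plan is to read \eqref{eq1} as a linear system in the unknown vector $\mathbf{w}$ and to look for a solution lying in the plane $\mathrm{span}\{\mathbf{x}_1,\mathbf{x}_2\}$. Writing $\mathbf{w}=\alpha\mathbf{x}_1+\beta\mathbf{x}_2$ for scalars $\alpha,\beta$ to be determined and substituting into \eqref{eq1}, the two requirements $\mathbf{w}\cdot\mathbf{x}_1=\mathbf{w}\cdot\mathbf{x}_2=v$ become the $2\times 2$ system
\begin{equation*}
\begin{pmatrix} \mathbf{x}_1\cdot\mathbf{x}_1 & \mathbf{x}_1\cdot\mathbf{x}_2 \\ \mathbf{x}_1\cdot\mathbf{x}_2 & \mathbf{x}_2\cdot\mathbf{x}_2 \end{pmatrix}\begin{pmatrix}\alpha\\\beta\end{pmatrix}=\begin{pmatrix}v\\v\end{pmatrix},
\end{equation*}
whose coefficient matrix is the Gram matrix $G$ of the pair $\mathbf{x}_1,\mathbf{x}_2$.

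The key step is that $G$ is invertible: the Gram determinant of a finite family of vectors vanishes exactly when the family is linearly dependent, so here $\det G>0$. Consequently the system has a (unique) solution $(\alpha,\beta)^T=G^{-1}(v,v)^T$, and we set $\mathbf{w}=\alpha\mathbf{x}_1+\beta\mathbf{x}_2$; by construction it satisfies \eqref{eq1}. It then remains only to check that $\mathbf{w}\neq\mathbf{0}$, which is immediate since $\mathbf{w}\cdot\mathbf{x}_1=v\neq 0$.

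I do not expect a serious obstacle here; linear independence of $\mathbf{x}_1,\mathbf{x}_2$ is invoked exactly once, to guarantee invertibility of $G$ (equivalently, that the $2\times n$ matrix with rows $\mathbf{x}_1^T,\mathbf{x}_2^T$ has full row rank, so the affine solution set of \eqref{eq1} is non-empty, of dimension $n-2\geq 0$). For $n\geq 3$ this solution set is in fact a positive-dimensional affine subspace, so there is a whole continuum of admissible $\mathbf{w}$; this abundance is precisely what will later undermine the claim ``$\mathbf{w}_i\cdot\mathbf{x}_k\neq\mathbf{w}_i\cdot\mathbf{x}_{k'}$'' asserted in the proof of \cite[Theorem~2.1]{Huang06}.
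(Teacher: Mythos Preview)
Your proof is correct and takes a genuinely different, coordinate-free route from the paper's. The paper observes that linear independence of $\mathbf{x}_1,\mathbf{x}_2$ forces the $2\times n$ matrix with rows $\mathbf{x}_1^T,\mathbf{x}_2^T$ to have a nonsingular $2\times 2$ minor; after reordering coordinates so that this minor sits in the first two columns, it solves the $2\times 2$ system for $(w_1,w_2)$ and pads with zeros. You instead restrict to $\mathrm{span}\{\mathbf{x}_1,\mathbf{x}_2\}$ and invoke invertibility of the Gram matrix. Your argument avoids the ``without loss of generality'' coordinate-selection step and is arguably cleaner; the paper's version is slightly more elementary in that it does not appeal to the Gram-determinant criterion but only to the existence of a nonzero $2\times 2$ minor. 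Both are constructive, and your final remark about the $(n-2)$-dimensional affine family of solutions is exactly the observation the paper exploits in Remark~\ref{rm1}.
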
 

\begin{proof}
The proof of this statement follows directly from the Fredholm alternative. However, below, we present a constructive proof.
Let us assume, without loss of generality that the first two coordinates of $\mathbf{x}_1$ and $\mathbf{x}_2$ form linearly independent vectors. That is, $(x_{11},x_{12})$ is linearly independent to $(x_{21},x_{22})$. Let $X$ be the $2\times 2$ matrix with those vectors as rows. Then, the matrix equation
$$
X\cdot (w_1,w_2)^T=(v,v)
$$
has a unique non-zero solution. Let us name $(w^*_1,w^*_2)$ to such a solution. Then, the vector $\mathbf{w}=(w^*_1,w^*_2,0,\dots,0)$ solves Equation \eqref{eq1}.
\end{proof}

\begin{remark}
\label{rm1}
From Proposition~\ref{prop1}, it follows that 
\begin{itemize}
\item Under its condition we have infinitely many weight vectors $\mathbf{w}$ that solve the equation $\mathbf{w}\cdot \mathbf{x}_{1}=\mathbf{w}\cdot \mathbf{x }_{2}$ and thereby violate the condition put forward in the proposed proof of Theorem 2.1;
\item Under the reverse condition that among $N$ different training samples there are no two linearly independent vectors $\mathbf{x}_1, \mathbf{x}_2\in \R^n$, $n\geq 2$, we get the trivial conclusion that all vectors $\mathbf{x}_1,\ldots, \mathbf{x}_N$ are scaled versions of, say, $\mathbf{x}_1$, therefore \eqref{eq1} has infinitely many solutions.
\end{itemize}
\end{remark}

\item The second incorrect statement in the proof of~\cite{Huang06} is related to the vector $\mathbf{c}$  defined by
$$
\mathbf{c}(b_i)=(g(\mathbf{w}_i \cdot\mathbf{x}_1+b_i),\ldots, g(\mathbf{w}_i \cdot\mathbf{x}_N+b_i))^T.
$$
for $b_i\in (a,b)	\subset \mathbb{R}$. 
Specifically, in the proof one can read
\begin{quotation}
\noindent {\it ``Vector $\mathbf{c}$ does not belong to any subspace whose dimension is less than N.''} 
\end{quotation}
 
This formulation is clearly  wrong from the beginning, since every vector belongs to the subspace of dimension one generated by itself. Analyzing  the proof, it seems that the authors intended to show that the $N$-dimensional column vector $\mathbf{c}(b_i)$ cannot be represented as a linear combination of the other $N-1$ column vectors of $\mathbf H$. Contrary to this assumption, the proposed proof aims to show that there exists a (nonzero) $N$-dimensional vector $\alpha$ such that it is orthogonal to the vector $\mathbf{c}(b_i)-\mathbf{c}( a)$. In particular, the proposed proof aims to show that
\begin{equation}
\label{eq-1}
\langle\alpha,\mathbf{c}(b_i)-\mathbf{c}(a)\rangle=0.
\end{equation}
It is clear that the proposed formulation is not equivalent to the required one. However, even the proof for the proposed reformulation is incorrect. At one point of the proof, by reductio ad absurdum, authors of \cite{Huang06} tried to obtain a contradiction from the following equation, which is equivalent to \eqref{eq-1}:
\begin{equation}
\label{eq2}
g(\mathbf{w}_i \cdot\mathbf{x}_N+b_i)=-\sum_{p=1}^{N-1}\gamma_p g(\mathbf{w}_i \cdot\mathbf{x}_p+b_i)+z/\alpha_N,
\end{equation}
where $\alpha_N\neq 0$, $z=\langle\alpha,\mathbf{c}(a)\rangle$, and $\gamma_p=\frac{\alpha_p}{\alpha_N}$, $p=1,\ldots,N-1$.
With this purpose in mind, they sequentially apply to both parts of (\ref{eq2}) the operation of differentiation ``with respect to $b_i$'', and obtain a  set of countably many linear equations with $N-1$ unknown coefficients as follows:
\begin{equation}
\label{eq+1}
g^{(l)}(\mathbf{w}_i \cdot\mathbf{x}_N+b_i)=-\sum_{p=1}^{N-1}\gamma_p g^{(l)}(\mathbf{w}_i \cdot\mathbf{x}_p+b_i),\qquad l=1,\ldots, N,\ldots,
\end{equation}
where $g^{(l)}$ is the $l$-th derivative of function $g$. 
Literally they assert:
 \begin{quote}
     \em However, there are only $N-1$ free coefficients $\gamma_1,\gamma_2\dots\gamma_{N-1}$ for the derived more than $N-1$ linear equations, this is contradictory. 
 \end{quote}

 Even assuming that something is missing in the text about the author's reasoning, the falsity of this reasoning easily follows from two facts:
  \begin{itemize}
  \item Since the original equation \eqref{eq2} is valid just for a finite set of specific values of the argument of the activation function $g$, it may not hold for the same values of the argument after replacing the activation function $g$ with any of its derivatives. Example \ref{examContDer} below illustrates this point.

\begin{example}\label{examContDer}
For a numerical example, we choose the function $g(x)=\sin{x}$ on the interval $[0,2\pi]$ and consider the Equation (\ref{eq2}) with the following parameters: $N=3$, $a=0, b_i=\pi$, $w_i=\pi$, $w_0=-\pi$, $x_1=1/4$, $x_1=1/2$, $x_1=3/4$, so that $\mathbf{c}(b_i)=(g(b_i+w_ix_1), g(b_i+w_ix_2),g(b_i+w_ix_3))$,
$\mathbf{c}(a)=(g(a+w_0x_1), g(a+w_0x_2),g(a+w_0x_3))$. After substitution and computation, we obtain
that $\mathbf{c}(b_i)=(-\sqrt{2}/2, -1, -\sqrt{2}/2)$, $\mathbf{c}(a)=(-\sqrt{2}/2, -1, -\sqrt{2}/2)$, i.e. $\mathbf{c}(b_i)=\mathbf{c}(a)$. Therefore, the equation (\ref{eq-1}) has infinitely many solutions, and we can take $\alpha_1=1,\alpha_1=2,\alpha_1=3$ as one of the possible solutions. Then the equivalent (\ref{eq2}) of the equation (\ref{eq-1}) has the form:
\begin{equation}
\label{eq+2}
g(b_i+w_ix_3)=(-1/3)g(b_i+w_ix_1)-(2/3)g(b_i+w_ix_2)+(1/3) (\alpha,\mathbf{c}(a)),
\end{equation}
which is obviously correct.
However, if assuming that $b_i$ denotes the independent variable, we formally apply differentiation to both sides of the equation (\ref{eq+2}), we will not obtain the correct equality. Indeed,
\[ 
g'(b_i+w_ix_3)=(-1/3)g'(b_i+w_ix_1)-(2/3)g'(b_i+w_ix_2),
\]
does not hold for the derivative $g'(x)=\cos{x}$ because
\[ 
\cos{(\pi+3/4\pi)}\neq (-1/3)\cos{(\pi+1/4\pi)}-(2/3)\cos{(\pi+1/2\pi)}.
\]   
\end{example}

     \item There is no contradiction in the statement that there are compatible systems of more than $N$ (or even infinitely many)   linear equations with $N-1$ unknowns. As a trivial example, consider a functional tautology that remains true after replacing all functions with their derivatives.   Example \ref{Ex2} below illustrates this point.
    \begin{example}
\label{Ex2} 
Let us consider the infinitely differentiable real functions $f(x)=\cos(2x)$, $g(x)= \cos^2(x)$ and $h(x)=\sin^2(x)$. Then,  the equality:  
         $$
         f(x)=\gamma_1g(x)+\gamma_2 h(x)
         $$
         is tautologically true for all $x\in\R$ if $\gamma_1=1$ and $\gamma_2=-1$. If we compute the $k$-th derivative in each side of the equality, we can easily check that the equality
         $$
         f^{(k)}(x)=\gamma_1g^{(k)}(x)+\gamma_2 h^{(k)}(x)
         $$
         also holds for $x\in\R$ if $\gamma_1=1$ and $\gamma_2=-1$. Consequently, fixed $x\in\R$, the infinitely  many generated lineal equations form a system 
         $$
        \left\{ f^{(k)}(x)=\gamma_1g^{(k)}(x)+\gamma_2 h^{(k)}(x)\right\}_{k\in\mathbb{N}}
         $$
        which  is compatible, since $\gamma_1=1$ and $\gamma_2=-1$ is always a solution.
     \end{example}
    
 \end{itemize}
\end{enumerate}
Thus, the proposed proof in \cite{Huang06} does not support the assertion given by Theorem~2.1.

\subsubsection*{Further counterexamples to Theorem 2.1}\label{Th21Cexamples}

In this section, we aim to show that the statement of Theorem 2.1 in \cite{Huang06} is fundamentally false by providing several counterexamples. All of them show special cases that are consistent with the assumptions of Theorem 2.1 but contradict its conclusion.

\begin{itemize}
    \item If the activation function $g$ is constant, then all elements of the matrix
$\mathbf H =(h_{ji})$, where $h_{ji}=g(\mathbf{w}_i\cdot \mathbf{x}_j+b_i)$, are equal, so $\mathbf H$ is singular (has no inverse). This case is not excluded under the conditions of Theorem 2.1, since the constant function is infinitely differentiable.

\item If among $N$ different samples $(\mathbf x_j, \mathbf t_j)$ there are two that differ only in the second components, i.e. for some $j_1\neq j_2$ we have $\mathbf x_{j_1}=\mathbf x_{j_2}$ and $\mathbf t_{j_1}\neq \mathbf t_{j_2}$, then the $j_1$-th and $j_2$-th rows of the matrix $\mathbf H =(h_{ji})$ coincide, which means $\mathbf H$ is singular.
\item If among $N$ pairs $(\bf w_i,b_i)$ of weight vectors $\bf w_i$ and bias values $b_i$, selected according to a continuous probability distribution, there are two that coincide, i.e. $\bf w_{i_1}=\bf w_{i_2}$ and $b_{i_1}=b_{i_2}$ for some $i_1\neq i_2$, then the $i_1$-th and $i_2$-th columns of the matrix $\mathbf H =(h_{ji})$ coincide, which means $\mathbf H$ is singular.
 \item If the activation mapping $g\colon \R\to \R$ is periodic, that is, there is $p\in\R$ such that $g(x+p\cdot k)=g(x)$ for all $x\in\mathbb{R}$ and $k\in\mathbb{N}$, then among randomly selected pairs $(\bf w_i,b_i)$ of weight vectors $\bf w_i$ and bias values $b_i$, configuration $\bf w_{i_1}=\bf w_{i_2}$ and $b_{i_1}=b_{i_2}+p$ can arise for some $i_1\neq i_2$. In this case, the $i_1$th and $i_2$th columns of the matrix $\mathbf H =(h_{ji})$ coincide, which means $\mathbf H$ is singular.
\end{itemize}

\subsection{Refutation of the proof of  \cite[Theorem 2.2]{Huang06}}

We will show here that  Theorem 2.2, does not have a correct proof either.  Its general assumptions, up to the choice of the parameter $\tilde N$, are the same as in Theorem 2.1. Below we recall (literally) the statement:

\noindent
 \textbf{Theorem 2.2}   {\it Given any small positive value $\varepsilon >0$  and activation function $g\colon\R\to \R$ which is infinitely differentiable in any interval, there exists $\tilde N\leq N$ such that for $N$ arbitrary distinct samples $\{(\mathbf{x}_i,\mathbf{t}_i)\mid i=1,\ldots, N\}$, where $\mathbf{x}_i\in \R^n$, and $\mathbf{t}_i\in \R^m$, for any $\mathbf{w}_i$ and $b_i$ randomly chosen from any intervals of $\R^n$ and $\R$, respectively, according to any continuous probability distribution, then with probability one, $\|\mathbf H\beta-\mathbf T\|<\varepsilon$}.
 \par\smallskip

To begin with, the statement suffers from the same shortcomings as those noted in relation to Theorem 2.1. A first remark refers to the fact that the proofs of both assertions of Theorems 2.1 and 2.2 do not use any probability distributions. A second remark is that the actual proof of Theorem 2.2 is not given, but only noted that  ``{\it The validity of the theorem is obvious, otherwise, one
could simply choose $\tilde N=N$.''
}

Regardless the fact that Theorem 2.1 has not been correctly proven, to refute this reasoning, we note that   there is no apparent relationship between the norm of the difference $\|\mathbf H\beta-\mathbf T\|$ and the number $\tilde N$ of neurons in the hidden layer. Moreover, with each new value of $\tilde N$, all neural network parameters and hence the matrix $\mathbf H$ are updated due to the set of random choice of weights and biases. Therefore, there is neither regularity nor stability in the elements of the matrix $\mathbf H$, so the assertion that a $\tilde N$ guarantees the desired inequality is not confirmed.

\subsection{\label{S3}On the minimum norm least-squares solution of SLFNs}

After analyzing in  previous sections the flaws in the proofs of Theorems~2.1 and~2.2 in~\cite{Huang06},  we focus here on the proposed minimum norm least-squares solution of SLFNs \cite[Section 3.2]{Huang06}, where we can find the following statement
 ``{\it For fixed input weights $\mathbf{w}_i$ and hidden layer biases $b_i$,   to train an SLFN is simply equivalent to finding a least squares solution $\hat \beta$ of the linear system (\ref{SLFNmodel2}), i.e. $\mathbf H\beta=\mathbf T$}.'' 
This least squares solution is proposed to be calculated using the generalized inverse Moore-Penrose (pseudo-inverse) matrix so that
\begin{equation}
\label{eq3bis}
\hat \beta= {\mathbf H}^\dagger \mathbf T.
\end{equation}

It is well-known \cite{Penrose56} that, in general, a pseudo-inverse matrix ${\mathbf H}^\dagger$ that solves (\ref{SLFNmodel2}) may not exist, and if it exists, then it may not be unique.

The matrix $\hat \beta$ in (\ref{eq3bis}) solves the problem only in the ``least squares'' sense, that is, we have 
\[ 
  \|\mathbf H\beta-\mathbf T\|_F\geq \|\mathbf H\hat \beta-\mathbf T\|_F,
\]
for all $\beta\in\R^{\tilde N\times m}$, where $\|\cdot\|_F$ denotes the Fr\"obenius norm. As a result, this means that the norm $\|\mathbf H\hat \beta-\mathbf T\|_F$ is a lower bound of the error of the proposed ELM methodology.

In consequence, based on a random and then a fixed choice of the influencing parameters $\mathbf{w}_i$, $b_i$, $i=1,\ldots, \tilde N$, and $\mathbf{x}_j,\, j=1,\ldots, N$, of the matrix $\mathbf H$, we can neither guarantee that the system (\ref{SLFNmodel2}) has a solution, nor that the distance between the matrix $\mathbf T$ and the set of matrices $\{\mathbf H\beta\mid \beta\in \R ^{\tilde N\times m} \}$ is smaller than any prescribed $\varepsilon$.

\section{Critical Notes on the Performance Evaluation}
\label{S4}


This section focuses on the part of the paper \cite{Huang06} that discusses the comparison of some test cases computed by the proposed ELM learning algorithm  with other learning strategies, namely traditional feed-forward network learning algorithms such as back-propagation (BP). 

The first benchmark problem was the approximation of the “sinc function”\footnote{The unnormalized sinc function is defined for $x\neq 0$ by $\mathrm{sinc}\, x={\frac {\sin x}{x}}.$} with noise, where the training and testing sets contained 5000 data pairs $(x_i, y_i)$ and $x_i$, uniformly randomly distributed over the interval $(-10,10)$. For both learning algorithms, ELM and BP, the number of hidden nodes was 20.
The authors of \cite{Huang06} stated that after 50 tests, the average results and standard
deviations showed that accuracy and deviations were similar, while ELM performed 170 times faster than the BP algorithm. 

Commenting on these results, we will say that they do not confirm Theorems~2.1 and 2.2 and, moreover, they are not at all related to the theoretical part proposed in \cite{Huang06}, section 2. In detail:
\begin{itemize}
\item Theorem~2.1 states that when the number of training pairs $N$ coincides with the number of neurons in a hidden layer $\tilde N$, then the ELM solves the interpolation problem, or more precisely, there exists $\beta$ such that $\|\mathbf H\beta-\mathbf T\|=0$.
\item Theorem~2.2 states that the difference $\|\mathbf H\beta-\mathbf T\|$ can be made arbitrarily small for some existing $\tilde N\leq N$.
\end{itemize}  

Both theorems concern the ability of ELM to model input-output relationships defined by $N$ samples as accurately as possible. However, the obtained results of approximating the “sinc function with noise” are far from accurate with respect to the input-output pairs used for training. The latter was specifically chosen by the authors of \cite{Huang06}, who defined it as follows (see last paragraph on page 5): ``{\it large uniform noise distributed in $[-0.2,0.2]$ has been added to all the training samples.}''

After tuning the ELM parameters, testing was performed on noise-free data (not used in training), and an accuracy estimate for the ``noise-free sinc function'' was shown. The question of why training pairs were not taken into account and success was indicated by results other than expected was not raised.

Last but not least, in the test cases, the actual ELM parameters were not calculated ``in one run'' as proposed by the ELM algorithm. Instead,
``{\it 50 trials have been conducted for all the algorithms and the average results [\dots] are shown in Table 1.}''~\cite[pg. 494]{Huang06}.%
This means that when running tests, the authors contradict their own ELM algorithm, which is based on three deterministic steps and does not involve any trials.

\section{Towards a correct version of~\cite[Theorem 2.1]{Huang06}}\label{SecTeoNew}

In this section, we present a result that, by slightly changing the conditions of Theorem 2.1, proves its validity in probability spaces.
In doing so, we take a first step towards a theoretical support for ELM.
Two facts should be taken into account: on the one hand, the ELM method works for a certain number of applications (this fact is known from many published papers), and on the other hand, this method is based on an incorrectly proven statement~\cite[Theorem~2.1]{Huang06}. Hence, our attempt to provide a correct proof of~\cite[Theorem~2.1]{Huang06} implicitly justifies existing applications.

\subsection{On the random selection of weights and biases}

Below we present a theoretical result that refines the formulation of~\cite[Theorems 2.1]{Huang06} by imposing conditions on the dataset considered for training and the deterministic parameters (activation function and number of neurons) of the neural network. For the sake of presentation and understanding,  we have removed the reference to probability theory from the proposed statement, since it is not considered in the original proof of~\cite[Theorems 2.1]{Huang06}; such a link is left for the subsequent  Section~\ref{probELM}. Note also that in the formulation of Theorem 2.1, the dataset consists of input-output vectors, denoted by $\bf x_i$ and $\bf t_i$, and the non-deterministic parameters, i.e. vectors of weights and bias values, are denoted by 
$\bf w_i$ and $b_i$. 

\begin{theorem}\label{teoNew}
For a given standard SLFN with $N$ distinct input-output pairs $(\bf{x_i},t_i)$ 
(where $\mathbf x_i\in\R^n$ and $\mathbf t_i\in\R^m$) and $N$ hidden nodes, where
\begin{itemize}
    \item the activation function $g\colon \R\to \R $ obeys the properties: $g\in\mathcal{C}^1$ ( i.e.  $g$ is differentiable and $g'$ is continuous) and the set of critical points $\{x\in\mathbb{R}\mid g'(x)= 0\} $ is countable,
    \item the input vectors $\bf{ x_i}$ are lineally independent,\footnote{Implicitly, we require that $n\geq N$.}
\end{itemize}
it is true that the interior\footnote{Interior with respect to the standard topology of $\mathbb{R}^{N(n+1)}$, i.e., $\bf x\in \mathbb{R}^{N(n+1)}$ is in the interior of a set $S\subseteq \mathbb{R}^{N(n+1)}$ if there exist a ball $B({\bf x},\varepsilon)$ with center $\bf x$ such that $B({\bf x},\varepsilon)\subseteq S$.} of the set 
\[ 
W=\{(\mathbf{w}_1,b_1,\dots,\mathbf{w}_N,b_N)\in \R^{N(n+1)}\mid  \text{$\mathbf H$ of the SLFN is not invertible}\},
\] 
 is empty.
\end{theorem}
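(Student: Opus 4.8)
The plan is to prove the statement in its equivalent, dense form: the set $W^{c}$ of parameter tuples for which $\mathbf H$ \emph{is} invertible is dense in $\R^{N(n+1)}$, i.e. every open ball $B\subseteq\R^{N(n+1)}$ contains a tuple $(\mathbf w_1,b_1,\dots,\mathbf w_N,b_N)$ making $\mathbf H$ invertible. Since such a ball contains a product $B_1\times\cdots\times B_N$ of open balls $B_i\subseteq\R^{n+1}$ in the $(\mathbf w_i,b_i)$-coordinates, and since the $i$-th column of $\mathbf H$, namely $\mathbf c_i=\bigl(g(\mathbf w_i\cdot\mathbf x_1+b_i),\dots,g(\mathbf w_i\cdot\mathbf x_N+b_i)\bigr)^{T}$, depends only on $(\mathbf w_i,b_i)$, it suffices to pick $(\mathbf w_i,b_i)\in B_i$ one at a time so that the columns $\mathbf c_1,\dots,\mathbf c_N$ become linearly independent.

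I would carry this out by induction on $k=0,1,\dots,N$: having chosen $(\mathbf w_1,b_1)\in B_1,\dots,(\mathbf w_k,b_k)\in B_k$ so that $\mathbf c_1,\dots,\mathbf c_k$ are linearly independent (the case $k=0$ being vacuous), let $V=\operatorname{span}\{\mathbf c_1,\dots,\mathbf c_k\}$, a subspace of $\R^{N}$ of dimension $k<N$, and fix a nonzero $\boldsymbol\lambda\in V^{\perp}$. If for \emph{every} $(\mathbf w,b)\in B_{k+1}$ one had $\mathbf c_{k+1}(\mathbf w,b)\in V$, then in particular
\[
\varphi(\mathbf w,b):=\langle\boldsymbol\lambda,\mathbf c_{k+1}(\mathbf w,b)\rangle=\sum_{j=1}^{N}\lambda_j\,g(\mathbf w\cdot\mathbf x_j+b)=0\qquad\text{on }B_{k+1}.
\]
The goal of the step is to contradict this identity; once it is contradicted, some $(\mathbf w_{k+1},b_{k+1})\in B_{k+1}$ gives $\mathbf c_{k+1}\notin V$, hence $\mathbf c_1,\dots,\mathbf c_{k+1}$ linearly independent, which completes the induction and at $k=N$ produces an invertible $\mathbf H$ inside $B$.

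The heart of the matter is the following observation, which is exactly where the two hypotheses on the SLFN are used. The map $L\colon\R^{n+1}\to\R^{N}$, $(\mathbf w,b)\mapsto(\mathbf w\cdot\mathbf x_1+b,\dots,\mathbf w\cdot\mathbf x_N+b)$, is linear and its matrix has rows $(\mathbf x_j^{T}\mid 1)$; since the $\mathbf x_j$ are linearly independent in $\R^{n}$, these rows are linearly independent in $\R^{n+1}$ (so $n\ge N$), whence $L$ is surjective and therefore an open map. Consequently $L(B_{k+1})$ is an open subset of $\R^{N}$ and contains a box $I_1\times\cdots\times I_N$ of open intervals, on which $\sum_j\lambda_j g(u_j)=0$. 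Choosing $j_0$ with $\lambda_{j_0}\ne 0$ and freezing the coordinates $u_j$ for $j\ne j_0$ forces $g$ to be constant on $I_{j_0}$; hence $g'\equiv 0$ on $I_{j_0}$, so $\{x\in\R\mid g'(x)=0\}$ is uncountable, contradicting the standing assumption on $g$. (The base case $k=0$ is the special instance $\boldsymbol\lambda=e_{j_0}$, asserting merely that $\mathbf c_1$ cannot vanish identically on $B_1$.)

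I expect the main obstacle to be in \emph{setting up} the inductive step cleanly — in particular, realizing that it is enough to annihilate a single functional $\boldsymbol\lambda\in V^{\perp}$ rather than all of $V^{\perp}$ — together with the topological passage from an open ball in $(\mathbf w,b)$-space to an open box in $u$-space, which is precisely the point at which the linear independence of the inputs $\mathbf x_i$ is indispensable. Once that is in place, the reduction to ``$g$ is constant on an interval'' and the final appeal to $g\in\mathcal C^{1}$ with countable critical set are routine.
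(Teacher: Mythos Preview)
Your argument is correct, and it is genuinely different from the paper's proof. The paper proceeds by induction on $N$ itself: assuming $\mathring W\neq\varnothing$, it differentiates the determinant $F=\lvert\mathbf H\rvert$ with respect to the coordinates of $\mathbf w_1$, expands $F$ in cofactors along the first column to obtain $\sum_j x_{jk}(-1)^{j+1}g'(\mathbf w_1\cdot\mathbf x_j+b_1)M_{j1}=0$, uses linear independence of the $\mathbf x_j$ to force each term $g'(\cdot)M_{j1}=0$, argues via countability of the critical set that $M_{j1}=0$ on an open set, and finally invokes the induction hypothesis on the $(N-1)\times(N-1)$ minor $M_{11}$. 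This requires a somewhat delicate discussion of projections and accumulation points to pass between the cases $g'=0$ and $M_{j1}=0$.

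Your route is shorter and more transparent: you build the columns one at a time and use that the affine map $L(\mathbf w,b)=(\mathbf w\cdot\mathbf x_1+b,\dots,\mathbf w\cdot\mathbf x_N+b)$ is surjective---hence open---to turn the identity $\sum_j\lambda_j g(\mathbf w\cdot\mathbf x_j+b)\equiv 0$ on a ball into $\sum_j\lambda_j g(u_j)\equiv 0$ on a box in the \emph{independent} variables $u_j$, whence $g$ is constant on an interval. This sidesteps cofactor expansions, derivatives of determinants, and the topological bookkeeping in the paper. It also makes visible that the continuity of $g'$ is not actually needed: mere differentiability of $g$ plus countability of $\{g'=0\}$ suffices. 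The paper's approach, by contrast, stays closer to the determinant viewpoint and makes the inductive structure in $N$ explicit, which may be preferable if one later wants quantitative control over the minors.
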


\begin{proof}  Let us as proceed by induction over $N$. If $N=1$, then  $\mathbf H=g(\mathbf w_1\mathbf x_1+b_1)$ and thus, $\mathbf H$ is not invertible if and only if $g(\mathbf w_1\mathbf x_1+b_1)=0$. 
On the one hand, note that from ``$\{ x\in\mathbb{R}\mid g'(x)= 0\} $ is countable''  and Rolle's Theorem\footnote{Formally speaking, here we use the following extended version of Rolle's Theorem: given $f\colon \mathbb{R}\to\mathbb{R}$ derivable on $\mathbb{R}$, if $\lim_{x\to-\infty}f(x)=\lim_{x\to\infty}f(x)$ then there exists $c\in\mathbb{R}$ such that $f'(c)=0$.} we can assert that the set   $\{ x\in\mathbb{R}\mid g(x)= 0\} $ is countable as well. This assertion can be proved by dividing the set of real numbers in a numerable set of disjoint intervals which bounds are points in $\{ x\in\mathbb{R}\mid g'(x)= 0\} $. Then, by Rolle's Theorem, there is at most one root of $g$ in each of those intervals. Consequently, $\{ x\in\mathbb{R}\mid g(x)= 0\} $ is countable as well. 
On the other hand, for each $c$ such that $g(c)= 0$, the set $({\bf w_1},b_1)\in\mathbb{R}^{n+1}$ such that $\mathbf w_1\mathbf x_1+b_1=c$ is a  hyperplane of dimension $n$ in $\mathbb{R}^{n+1}$. As a result, the set of weights and biases $({\bf w_1},b_1)\in\mathbb{R}^{n+1}$ such that $\mathbf H$ is not invertible, is a countable union of hyperplanes in $R^{n+1}$, whose interior is empty in $R^{n+1}$.

Let $N>1$ and let us assume that the result holds for all $\overline N< N$. Let us proceed by reductio ad absurdum and let us assume that the interior $\mathring W$ of the set $W$ of weights and bias vectors  such that
$\mathbf H$ is not invertible is nonempty.   Note that, for all $(\mathbf{w}_1,b_1,\dots, \mathbf{w}_N,b_N) \in \mathring W$ we have that the determinant of $\mathbf H$ is zero.  Let us consider the mapping that provides the result of the determinant of $\mathbf H$ for each combination of weights and biases, formally, $F\colon\mathbb{R}^{N(n+1)}\to\mathbb{R}$, given by $F(\mathbf{w}_1,b_1,\dots, \mathbf{w}_N,b_N)=\lvert \mathbf H\rvert$. On the one hand, we have that $F$ is differentiable because $g\in\mathcal{C}^1$; on the other hand, 
we also have $F(\mathbf{w}_1,b_1,\dots, \mathbf{w}_N,b_N)=0$ for all  $(\mathbf{w}_1,b_1,\dots, \mathbf{w}_N,b_N) \in \mathring W$, namely, $F$ is constant in the open set $\mathring W$. As a result we obtain that, for all $(\mathbf{w}_1,b_1,\dots, \mathbf{w}_N,b_N) \in\mathring W$
\begin{equation}\label{partials=0}
 \dfrac{\partial F}{\partial w_{ij}}(\mathbf{w}_1,b_1,\dots, \mathbf{w}_N,b_N)=0\,.
\end{equation}

In order to obtain an analytic formula of the partial derivatives with respect to $w_{1k}$ associated to the weight vector $\mathbf{w}_1$, let us consider a Laplace expansion (also called cofactor expansion) along the $1$st column  which results on the following expression of $F$
$$
F(\mathbf{w}_1,b_1,\dots, \mathbf{w}_N,b_N)=\lvert \mathbf H\rvert= \sum_{j=1}^{N} (-1)^{j+1}h_{j1}\cdot M_{j1}=
    \sum_{j=1}^{N} (-1)^{j+1}g(\mathbf{w}_1\cdot \mathbf{x}_j+b_1)\cdot M_{j1}
$$
which leads to the following formulae:
$$
\dfrac{\partial F}{\partial w_{1k}}(\mathbf{w}_1,b_1,\dots, \mathbf{w}_N,b_N)=
 \sum_{j=1}^{N}  x_{jk} (-1)^{j+1}g'(\mathbf{w}_1\cdot \mathbf{x}_j+b_1)\cdot M_{j1}
 $$
Recalling \eqref{partials=0}, we have  $
\dfrac{\partial F}{\partial w_{1j}}(\mathbf{w}_1,b_1,\dots, \mathbf{w}_N,b_N)=0
$ 
for all $(\mathbf{w}_1,b_1,\dots, \mathbf{w}_N,b_N) \in\mathring W$, and we obtain the following  $n$ equations  that hold for all $(\mathbf{w}_1,b_1,\dots, \mathbf{w}_N,b_N) \in\mathring W$:  
\begin{equation}\label{sistemequth1}
    \sum_{j=1}^{N}  x_{jk} (-1)^{j+1}g'(\mathbf{w}_1\cdot \mathbf{x}_j+b_1)\cdot M_{j1}=0    \qquad \text{for } k\in\{1,\dots,n\}.
\end{equation}
If we apply the change of variables $y_j=(-1)^{j+1}g'(\mathbf{w}_1\cdot \mathbf{x}_j+b_1)M_{j1}$, we obtain a homogeneous linear system of $n$ equations and $N$ unknowns with associated matrix of coefficients $\mathbf A=(x_{jk})$. Since the vectors $\bf{x}_j$ are lineally independent, the rank of $\mathbf A$ is $N$ 
and then, the only solution of the system is the trivial one, $y_j=0$, that is $(-1)^{j+1}g'(\mathbf{w}_1\cdot \mathbf{x}_j+b_1)M_{j1}=0$, for all $j\in\{1,\dots,N\}$ and $(\mathbf{w}_1,b_1,\dots, \mathbf{w}_N,b_N) \in\mathring W$. Hence, for each $j$, we have two possible cases:
\begin{enumerate}
\item $g'(\mathbf{w}_1\cdot \mathbf{x}_j+b_1)=0$ 
\item  $M_{j1}=0$
\end{enumerate}

We will prove now that the latter case always holds. Since   case (1) above takes into consideration just   the weight $\mathbf{w}_1$ and the bias $b_1$, we will only focus on those variables. Let us denote by $\pi(\mathring W)$ the projection of $\mathring W$ on its first $n+1$ components, fix $j\in{1,\dots,N}$, and consider the following set:
\begin{align*}
    B&=\{(\mathbf{w}_1,b_1)\in\pi(\mathring W)\subseteq \mathbb{R}^{n+1}\mid g'(\mathbf{w}_1\cdot \mathbf{x}_j+b_1)\neq 0\}
\end{align*}
Note that if the projection of a vector $(\mathbf{w}_1,b_1,\dots, \mathbf{w}_N,b_N) \in \mathring W$ on its first $n+1$ components belongs to $B$ then $M_{j1}=0$. 
Let us see now that if the projection of a vector $(\mathbf{w}_1,b_1,\dots, \mathbf{w}_N,b_N) \in \mathring W$ on its first $n+1$ components belongs to $\pi(\mathring W)\smallsetminus B$, then $M_{j1}=0$ as well. 

By following a similar reasoning than the one done by the case $N=1$, we obtain that  $\pi(\mathring W)\smallsetminus B$ is contained in a countable union of hyperplanes in $\mathbb{R}^{n+1}$. As a result, the set of accumulation points of $B$  coincides with the set of accumulation points of  $\pi(\mathring W)$. Consequently,\footnote{We are using the following standard result: Given $X\subseteq \mathbb{R}^k$, the projection of an accumulation point of $X$ is an accumulation point of any projection of $X$.} the projection of an accumulation point of $\mathring W$ is an accumulation point of $B$.
Since the determinant   $M_{j1}\colon \mathbb{R}^{N(n+1)} \to \mathbb{R}$ defines a continuous function, then\footnote{We are using the following standard result:
\emph{Let $f\colon\mathbb{R}^k\to\mathbb{R}$ be a continuous function such that $f(\bf x)=0$ for all $\bf x\in D\subseteq\mathbb{R}^k$, then $f(\bf x)=0$ for all $\bf x\in\overline D$, where $\overline {\mathbf D}$ is the set of accumulation points of $\mathbf D$.} } $M_{j1}=0$.

Note now that each cofactor $M_{j1}$ depends just on the weights $(\mathbf{w}_2,b_2,\dots, \mathbf{w}_N,b_N)$, that is, the weights of the projection of $W$ into the  last $(N-1)(n+1)$  components, and let us write $\pi_2 ( W)$ to denote the projection on these last  components.  Note also that each cofactor  $M_{j1}$ corresponds to a \emph{hidden layer output matrix} of a SLFN with $N-1$ hidden nodes. Therefore, we can apply the induction hypothesis, for instance with respect  to $M_{11}$ and ensure that the interior of the  set of weights and biases such that $M_{11}=0$ is empty in $\mathbb{R}^{(N-1)(n+1)}$.  Since $\pi_2(W)$ is contained in the set of weights and biases such that $M_{11}=0$ , the interior of $\pi_2(W)$ is empty as well.

Summarizing, we have
\begin{itemize}
\item $W\subseteq \mathbb{R}^{n+1}\times \pi_2(W)$,
\item the interior of $\mathbb{R}^{n+1}\times \pi_2(W)$ is empty;
\end{itemize}
which implies that the interior of $W$ is empty, which contradicts the hypothesis of reductio ad absurdum.
\end{proof}

\subsection{Relationship between~\cite[Theorem 2.1]{Huang06} and Theorem~\ref{teoNew}}\label{probELM}

As stated at the beginning of this section, the aim of Theorem~\ref{teoNew} is to establish a sound theoretical background to the ELM technique. Therefore, it is important to indicate their similarities and differences.
Let us recall that the consequent of~\cite[Theorem 2.1]{Huang06} is
\begin{quote}
\textit{``the probability of choosing  weights and biases such that $\mathbf H$ is not invertible is $0$''}
\end{quote}
whereas the consequent of our Theorem~\ref{teoNew} is      
    \begin{quote}    
     \textit{`` the interior of the set of $\mathbf{w}_i$ and $b_i$, of $\R^n$ and $\R$, respectively, such that $\mathbf H$ of the SLFN is not invertible, is empty in $\R^{N(n+1)}$''.} 
       \end{quote}
       
A very common approach~\cite{Billingsley95} to assign probabilities to the choice of vectors in $\mathbb{R}^n$  is by means of a measurable function $f$, and the identification of the probability of a set $S$ by $P(S)=\int_{S} f(x)dx$. 
With the previous identification, we need only consider general results from measure theory and probability theory, which we assume the reader is familiar with.
 
\begin{corollary}\label{CorProb}
 Let $W$ be the set of weights and biases $\mathbf{w}_i$ and $b_i$, of $\R^n$ and $\R$, respectively, such that the matrix $\mathbf H$ of the SLFN is not invertible. 
 Let $(\mathbb{R}^{N(n+1)},\mathcal{F}, P)$ be a probability space where the probability measure $P$ is defined in terms of a measurable function $f\colon \mathbb{R}^{N(n+1)}\to \mathbb{R}$ as $P(S)=\int_{S} f(x)dx$ for all $S\in\mathcal{F}$. Under the hypotheses of Theorem~\ref{teoNew}, if $W\in\mathcal{F}$ and $W$ is Jordan measurable, then $P(W)=0$, i.e.,  choosing a set of weights and biases $\mathbf{w}_i$ and $b_i$ such that $\mathbf H$ is not invertible has probability zero. 
\end{corollary}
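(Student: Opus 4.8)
The plan is to convert the topological conclusion of Theorem~\ref{teoNew} — that $W$ has empty interior — into a statement about Lebesgue measure, exploiting the hypothesis that $W$ is Jordan measurable, and then to transfer this to the probability measure $P$ through its representation $P(S)=\int_S f(x)\,dx$.

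First I would invoke Theorem~\ref{teoNew} to obtain $\mathring W=\emptyset$. Since $\overline W=\mathring W\cup\partial W$ for every set and $W\subseteq\overline W$, emptiness of the interior forces $W\subseteq\partial W$. Next I would use the standard characterization of Jordan measurability: the topological boundary of a Jordan measurable set has Jordan content zero, and hence Lebesgue measure zero. Combined with $W\subseteq\partial W$ and the completeness of Lebesgue measure, this yields $\lambda(W)=0$, where $\lambda$ denotes Lebesgue measure. Finally, since $P(W)=\int_W f(x)\,dx$ and the integral of an integrable function over a Lebesgue-null set vanishes, I would conclude $P(W)=0$, which is exactly the assertion.

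The one delicate point — and the step I expect to require the most care — is the implication ``empty interior plus Jordan measurable implies Lebesgue-null''. Here one must keep in mind that $W$ is, in essence, a countable union of hyperplanes and therefore unbounded, whereas the usual statement of the boundary characterization of Jordan measurability is phrased for bounded sets; so I would apply it to the bounded slices $W\cap[-k,k]^{N(n+1)}$ for $k\in\mathbb{N}$, each of which is bounded and Jordan measurable, and then use countable subadditivity of $\lambda$ to get $\lambda(W)=0$. Everything else is a routine appeal to results of measure and probability theory that the statement already presumes the reader is comfortable with.
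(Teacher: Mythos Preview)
Your proposal is correct and follows essentially the same route as the paper: from Theorem~\ref{teoNew} conclude $\mathring W=\emptyset$, hence $W\subseteq\partial W$; Jordan measurability gives $\partial W$ measure zero, and then $P(W)=\int_W f\,dx=0$. Your explicit handling of the unboundedness issue via the slices $W\cap[-k,k]^{N(n+1)}$ is a point of extra care that the paper's proof glosses over, but the underlying argument is the same.
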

 
 \begin{proof}
First, since $W$ is Jordan measurable, its boundary $\partial W$   has measure $0$; consequently, $\int_{\partial W} f(x)dx=0$.
Second, since   $\mathring W$ is empty (by Theorem~\ref{teoNew}), necessarily $W\subseteq \partial W$. By monotonicity of integrals $\int_{W} f(x)dx\leq \int_{\partial W} f(x)dx$.
Joining both facts, we have that
$$
P(W)=\int_{W} f(x)dx\leq \int_{\partial W} f(x)dx=0
$$
\end{proof}

Notice that we have not proved,  in general, that  choosing a set of weights and biases $\mathbf{w}_i$ and $b_i$ such that $\mathbf H$ is not invertible has probability zero. Corollary~\ref{CorProb} is given in the framework of continuous probability spaces, whereas computers work on discrete probability spaces. Moreover, we have imposed some important conditions in Theorem~\ref{teoNew} and also in Corollary~\ref{CorProb} that have to be taken into account; note that in the latter we have assumed additionally that the set of those weights and biases is  Jordan measurable. 

Finally, let us note that we have shown that our new results in this section provide an interesting sound alternative statement  to that of the original~\cite[Theorem 2.1]{Huang06}. In particular, thanks to Theorem~\ref{teoNew} and Corollary~\ref{CorProb} we can state that under their conditions, a random choice of weights $\mathbf{w}_i$ and biases $b_i$ leads to the very unlikely situation where the matrix $\mathbf H$ is not invertible. 

\section{ELM that exactly learns dataset $S$ exists}
\label{S7}

In this section, we show that the dataset $S$ proposed in Example~\ref{Ex3} (Section~\ref{S5}) as a counterexample to the ELM learning algorithm can be learned by ELM with hyperparameters different from those specified in Theorem~2.1 from \cite{Huang06}.

The ELM configuration we found, which successfully reproduces all 400 samples of the $S$ dataset from Example~\ref{Ex3}, uses ReLU (rectified linear unit) activation function and 8000 hidden neurons instead of the 400 stated by Theorem~2.1.
Thus, the found ELM configuration refuted two conditions in the formulation of Theorem 2.1 from \cite{Huang06}, namely: the activation function must be infinitely differentiable and the number of hidden neurons must be equal to the number of training samples.

Below in Figure~\ref{Fig1} we show two graphs illustrating the results of several runs with randomly selected weights and biases of the ELM learning algorithm with the ReLU activation function trained on a dataset $S$ consisting of $N=400$ training pairs. The number of hidden nodes is $\tilde N= 400$ (left) and $\tilde N= 8000$ (right). We again see that this newly proposed ELM configuration with $N=\tilde N$ cannot reproduce all $N=400$ samples of dataset $S$.

\begin{figure}
\includegraphics[height=5.75cm,width=7cm]{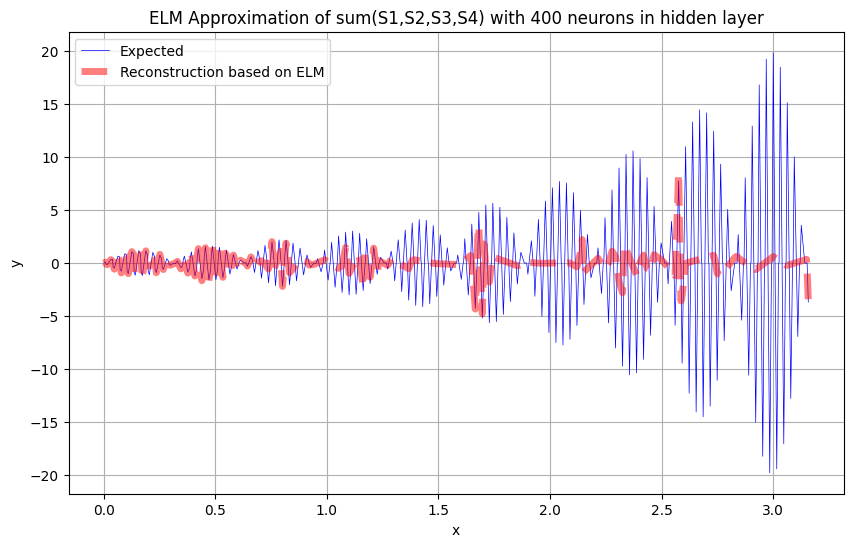}
\includegraphics[height=5.75cm,width=9cm]{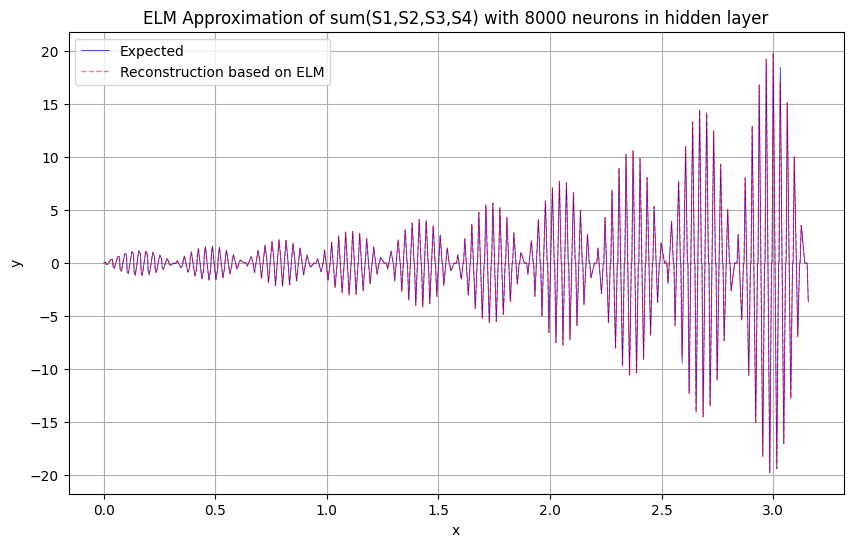}
\caption{Two plots illustrating the results of several runs with randomly selected weights and biases of the ELM learning algorithm with the ReLU activation function trained on a dataset $S$ consisting of $N=400$ training pairs. In the case of $\tilde N= 400$ hidden nodes (left), this ELM cannot reproduce all $N=400$ samples of dataset $S$. In the $\tilde N= 8000$ case (right), all $N=400$ samples of the $S$ dataset were successfully reproduced.}\label{Fig1}
\end{figure}

\section{Conclusion}\label{conclusion}

We have presented a reasonable critique of the principles of the so-called extreme learning machine methodology proposed in ~\cite{Huang06}. We began the analysis by introducing a counterexample to the methodology. We have shown that when using a standard SLFN architecture with random assignment of input weights and biases along with a random selection of an infinitely differentiable activation function, accurate reproduction of all training samples cannot be guaranteed if the numbers of hidden nodes and training samples are the same. This fact means that the theoretical framework presented in~\cite{Huang06} must have shortcomings.
In this regard, we analyzed and refuted the proofs of \cite[Theorems~2.1 and Theorems~2.2]{Huang06}. In particular, we pointed out several significant errors in both proofs and provided several families of counterexamples to these results. In our opinion, the biggest drawback is the identification of impossible events with events with probability $0$ in continuous probability spaces. 

Although the fundamentals of ELM are not sufficiently correct, from a practical point of view they have proven to be effective in solving many problems and therefore should not be completely rejected. In this work, we provided a new theoretical result that partially supports this method under certain conditions while preserving probabilistic uncertainty.
Finally, we showed that the dataset proposed as a counterexample to the ELM learning algorithm can be learned by ELM with hyperparameters different from those specified in
Theorem~2.1 from \cite{Huang06}.

Our further work continues in this direction. In addition to the mentioned theoretical result, which can be further improved, we aim to develop other results that will allow some modification of the ELM technique, for example, by changing the random selection of weights and biases.

\section*{Acknowledgement}
The article has been supported by the Polish National Agency for Academic Exchange Strategic Partnership Programme under Grant No. BPI/PST/2021/1/00031.

M. Ojeda-Aciego and N. Madrid are partially supported by the Ministry of Science and Innovation  (MCIN), the State Agency of Research (AEI) and the European Regional Development Fund (FEDER) through the project  VALID,\newline PID2022-140630NB-I00/AEI/10.13039/501100011033/ FEDER, UE.

\section*{References}
\bibliographystyle{abbrv}
\bibliography{elm,scopus-selection-23-24,scopus-neurocomputing}

\begin{thebibliography}{10}

\bibitem{Banskota20236479}
N.~Banskota, A.~Alsadoon, P.~Prasad, A.~Dawoud, T.~A. Rashid, and O.~H.
  Alsadoon.
\newblock A novel enhanced convolution neural network with extreme learning
  machine: facial emotional recognition in psychology practices.
\newblock {\em Multimedia Tools and Applications}, 82(5):6479--6503, 2023.

\bibitem{Billingsley95}
P.~Billingsley.
\newblock {\em Convergence of probability measures}.
\newblock John Wiley \& Sons, 2nd edition, 1999.

\bibitem{Cai202121}
Y.~Cai, Z.~Zhang, Q.~Yan, D.~Zhang, and M.~J. Banu.
\newblock Densely connected convolutional extreme learning machine for
  hyperspectral image classification.
\newblock {\em Neurocomputing}, 434:21--32, 2021.

\bibitem{Chakravarthy202347585}
S.~R.~S. Chakravarthy, N.~Bharanidharan, and H.~Rajaguru.
\newblock Processing of digital mammogram images using optimized {ELM} with
  deep transfer learning for breast cancer diagnosis.
\newblock {\em Multimedia Tools and Applications}, 82(30):47585--47609, 2023.

\bibitem{Huang03}
G.-B. Huang.
\newblock Learning capability and storage capacity of two hidden-layer
  feedforward networks.
\newblock {\em IEEE Trans. Neural Networks}, 14(2):274--281, 2003.

\bibitem{Huang15}
G.-B. Huang.
\newblock {What are Extreme Learning Machines? Filling the Gap Between Frank
  Rosenblatt's Dream and John von Neumann's Puzzle}.
\newblock {\em Cognitive Computation}, 7:263--278, 2015.

\bibitem{Huang06}
G.-B. Huang, Q.-Y. Zhu, and C.-K. Siew.
\newblock Extreme learning machine: theory and applications.
\newblock {\em Neurocomputing}, 70:489--501, 2006.

\bibitem{Jiang20233631}
F.~Jiang, Q.~Zhu, and T.~Tian.
\newblock Breast cancer detection based on modified {Harris Hawks} optimization
  and extreme learning machine embedded with feature weighting.
\newblock {\em Neural Processing Letters}, 55(4):3631--3654, 2023.

\bibitem{Jiang2023}
H.~Jiang, Y.~Chen, H.~Jiang, Y.~Ni, and H.~Su.
\newblock A granular sigmoid extreme learning machine and its application in a
  weather forecast.
\newblock {\em Applied Soft Computing}, 147:110799, 2023.

\bibitem{Kuila202329857}
S.~Kuila, N.~Dhanda, and S.~Joardar.
\newblock {ECG} signal classification to detect heart arrhythmia using {ELM and
  CNN}.
\newblock {\em Multimedia Tools and Applications}, 82(19):29857--29881, 2023.

\bibitem{Liu2023}
X.~Liu, W.~Yao, W.~Peng, and W.~Zhou.
\newblock Bayesian physics-informed extreme learning machine for forward and
  inverse {PDE} problems with noisy data.
\newblock {\em Neurocomputing}, 549:126425, 2023.

\bibitem{Novykov20231581}
V.~Novykov, C.~Bilson, A.~Gepp, G.~Harris, and B.~J. Vanstone.
\newblock Empirical validation of {ELM} trained neural networks for financial
  modelling.
\newblock {\em Neural Computing and Applications}, 35(2):1581--1605, 2023.

\bibitem{Ouyang2021}
T.~Ouyang.
\newblock Feature learning for stacked {ELM} via low-rank matrix factorization.
\newblock {\em Neurocomputing}, 448:82--93, 2021.

\bibitem{Peng2022596}
X.~Peng, H.~Li, F.~Yuan, S.~G. Razul, Z.~Chen, and Z.~Lin.
\newblock An extreme learning machine for unsupervised online anomaly detection
  in multivariate time series.
\newblock {\em Neurocomputing}, 501:596--608, 2022.

\bibitem{Penrose56}
R.~Penrose.
\newblock On best approximate solution of linear matrix equations.
\newblock {\em Proc. Cambridge Philosophical Society}, 52(1):17--19, 1956.

\bibitem{Schiassi:2021aa}
E.~Schiassi, R.~Furfaro, C.~Leake, M.~D. Florio, H.~Johnston, and D.~Mortari.
\newblock Extreme theory of functional connections: A fast physics-informed
  neural network method for solving ordinary and partial differential
  equations.
\newblock {\em Neurocomputing}, 457:334--356, 2021.

\bibitem{Sonugur20238553}
G.~Sonugur and A.~Cayli.
\newblock Turkish sign language recognition using fuzzy logic assisted {ELM and
  CNN} methods.
\newblock {\em Journal of Intelligent and Fuzzy Systems}, 45(5):8553--8565,
  2023.

\bibitem{SusCam20}
P.~Sussner and I.~Campiotti.
\newblock Extreme learning machine for a new hybrid morphological/linear
  perceptron.
\newblock {\em Neural Networks}, 123:288--298, 2020.

\bibitem{TamTat97}
S.~Tamura and M.~Tateishi.
\newblock Capabilities of a four-layered feedforward neural network: four
  layers versus three.
\newblock {\em IEEE Trans. Neural Networks}, 8(2):251--255, 1997.

\bibitem{Wang08}
L.~P. Wang and C.~Wan.
\newblock {Comments on `The Extreme Learning Machine'}.
\newblock {\em IEEE Trans. Neural Networks}, 19:1494--1495, 2008.
\newblock {Reply by Guang-Bin, Huang in pages 1495--1496.}

\bibitem{Wang202312367}
X.~Wang, K.~Wang, Y.~She, and J.~Cao.
\newblock Zero-norm {ELM} with non-convex quadratic loss function for sparse
  and robust regression.
\newblock {\em Neural Processing Letters}, 55(9):12367--12399, 2023.

\bibitem{Wu202310951}
C.~Wu, Y.~Sang, and Y.~Gao.
\newblock Extreme learning machine combining hidden-layer feature weighting and
  batch training for classification.
\newblock {\em Neural Processing Letters}, 55(8):10951--10973, 2023.

\bibitem{Zhang202310589}
F.~Zhang, S.~Chen, Z.~Hong, B.~Shan, and Q.~Xu.
\newblock A robust extreme learning machine based on adaptive loss function for
  regression modeling.
\newblock {\em Neural Processing Letters}, 55(8):10589--10612, 2023.

\bibitem{Zhang2023}
Q.~Zhang, E.~C. Tsang, Q.~He, and Y.~Guo.
\newblock Ensemble of kernel extreme learning machine based elimination
  optimization for multi-label classification.
\newblock {\em Knowledge-Based Systems}, 278:110817, 2023.

\bibitem{Zhang202311059}
S.~Zhang, D.~Guo, and T.~Zhou.
\newblock Twin extreme learning machine based on heteroskedastic {Gaussian}
  noise model and its application in short-term wind-speed forecasting.
\newblock {\em Journal of Intelligent and Fuzzy Systems}, 45(6):11059--11073,
  2023.

\bibitem{Zhao2023}
S.~Zhao, X.-A. Chen, J.~Wu, and Y.-G. Wang.
\newblock Mixture extreme learning machine algorithm for robust regression.
\newblock {\em Knowledge-Based Systems}, 280:111033, 2023.

\bibitem{Zhu202050}
H.~Zhu, G.~Liu, M.~Zhou, Y.~Xie, A.~Abusorrah, and Q.~Kang.
\newblock Optimizing weighted extreme learning machines for imbalanced
  classification and application to credit card fraud detection.
\newblock {\em Neurocomputing}, 407:50--62, 2020.

\end{thebibliography}

\end{document}